\definecolor{Gray}{gray}{0.85}
\newif\ifpaper
\def\b0{{0}}
\def\RR{\mathbb{R}}
\def\>{\rangle}
\def\rank{\operatorname{\mathop{rank}}}
\def\Set#1{\left\{ #1 \right\}}
\def\Bigbar#1{\mathrel{\left|\vphantom{#1}\right.}}
\def\Setbar#1#2{\Set{#1 \Bigbar{#1 #2} #2}}
\newtheorem{theorem}{Theorem}[section]
\newtheorem{lemma}[theorem]{Lemma}
\newtheorem{assumptions}[theorem]{Assumption}
\newenvironment{proof}{\par\noindent{\bf Proof:\ }}{\hfill$\Box$\\[2mm]}
\def\Span{\textrm{Span}}
\newcommand{\abs}[1]{\left|#1\right|}
\def\bydef{\mathrel{\mathop:}=}
\def\ones{\mathbf{1}}
\title{A Note on Connectivity of Sublevel Sets in Deep Learning}
\date{} 
\author{Quynh Nguyen\thanks{Email: quynhnguyenngoc89@gmail.com}
}
\begin{document}

\maketitle

\begin{abstract}
    It is shown that for deep neural networks, a single wide layer of width $N+1$ ($N$ being the number of training samples) 
    suffices to prove the connectivity of sublevel sets of the training loss function.
    In the two-layer setting, the same property may not hold even if one has just one neuron less 
    (i.e. width $N$ can lead to disconnected sublevel sets).
\end{abstract}

\section{Introduction}
Geometry of neural network loss landscape has been studied 
via the analysis of the global optimality of local minima \cite{Auer96,Choro15,Kawaguchi16,Quynh2017,Quynh2018b,Yun2019},
the existence of a continuous descending path to a global optimum \cite{Freeman2017,Quynh2019,Quynh2018c,SafSha2016,Venturi2019},
the connectivity of dropout-stable solutions \cite{kuditipudi2019explaining,shevchenko2020landscape},
and the topology of sublevel sets \cite{Quynh2019,Venturi2019}.
In this paper, we improve the result of \cite{Quynh2019}.

In particular, \cite{Quynh2019} shows that for a general class of convex loss functions (e.g. cross-entropy loss, square loss),
and for any training data with distinct samples, 
all the sublevel sets of the (empirical) training loss are connected if
the (deep) network satisfies the following conditions:
\begin{enumerate}
    \item The first hidden layer has $2N$ neurons ($N$ being the number of training samples), 
    and the other layers have non-increasing widths towards the output layer (a.k.a. {\em pyramidal network}).
    \item The activation is piecewise linear (excluding the linear ones), and strictly monotonic.
\end{enumerate}
On the one hand, we know that the connectivity of sublevel sets implies the following: 
(i) the loss surface has no bad local valleys, 
and that (ii) all the global minima are connected within a unique global valley.
On the other hand, we also know that for the property (i) to hold, a single layer of width $N$ suffices \cite{Quynh2018c,Venturi2019}.
This raises an intriguing question of how the loss function evolves when the number of neurons varies between $N$ and $2N$ neurons.
This paper aims to bridge this gap.
\paragraph{Main Contribution.}
We improve the width condition of the previous result from $2N$ to $N+1$ for deep architectures.
That is, if the width of the first layer is at least $N+1$, and the other assumptions hold as above, then every sublevel set of the empirical loss is connected.
Furthermore, in the two-layer setting, we show that the same statement is wrong
if one has one neuron less (i.e. with just $N$ neurons, the loss can have disconnected sublevel sets).
This shows that our width condition $N+1$ is tightest possible for this case unless additional assumptions on the data/network are made.

\paragraph{Preliminaries.} Consider an $L$-layer neural network where the widths of the layers are given by $(n_0,\ldots,n_L)$.
Here, $n_0$ and $n_L$ are input and output dimension respectively, and the rest are the hidden layers.
Let $N$ be the number of training samples.
Let $X\in\RR^{N\times n_0}$ and $Y\in\RR^{N\times n_L}$ be the training data.
The feature matrices $F_l\in\RR^{N\times n_l}$ are given by
\begin{align}\label{eq:def_feature_map}
    F_l=\begin{cases}
	    X & l=0,\\
	    \sigma(F_{l-1}W_l + \ones_N b_l^T) & l\in[L-1],\\
	    F_{L-1} W_L & l=L, 
        \end{cases}
\end{align}
where $W_l\in\RR^{n_{l-1}\times n_l}, b_l\in\RR^{n_l}$,
and $\sigma:\RR\to\RR$ is the activation function which is applied componentwise.
Let $\theta=(W_l,b_l)_{l=1}^{L}$ be the set of parameters.
The training loss is defined as
\begin{align*}
    \Phi(\theta) = \Psi(F_L(\theta), Y)
\end{align*}
where the function $\Psi(\hat{Y},Y)$ is assumed to be convex w.r.t. the first argument.

We consider the following class of activation functions.
\begin{assumptions}\label{ass:act1}
    $\sigma$ is strictly monotonic and $\sigma(\RR)=\RR.$
\end{assumptions}
\begin{assumptions}\label{ass:act2}
    There do not exist non-zero coefficients $(\lambda_i,a_i)_{i=1}^p$ with $a_i\neq a_j\,\forall\,i\neq j$ 
    such that $\sigma(x)=\sum_{i=1}^p\lambda_i\sigma(x-a_i)$ for every $x\in\RR.$
\end{assumptions}
Assumption \ref{ass:act2} is satisfied for every piecewise linear activation function of interest (e.g. ReLU and Leaky-ReLU), see \cite{Quynh2019}.
In the following, the $\alpha$-sublevel set of the training function is defined as
\begin{align}
     \forall\,\alpha\in\RR:\; L_\alpha=\Setbar{\theta}{\Phi(\theta)\leq\alpha}.
\end{align}
The $\alpha$-level set is defined as $\Setbar{\theta}{\Phi(\theta)=\alpha}.$
A subset $A\subseteq\RR^{n}$ is called connected if for any $x,y\in A$, there exists a continuous curve (path) $c:[0,1]\to A$ such that
it holds $c(0)=x$ and $c(1)=y.$

\section{Some Helpful Results}
\begin{theorem}\label{thm:lin_data}\cite{Quynh2019}
    Let Assumption \ref{ass:act1} hold, $\rank(X)=N$ and $n_1> \ldots> n_L$ where $L\geq 2.$
    Then,
    \begin{enumerate}
	\item Every sublevel set of $\Phi$ is connected.
	And $\Phi$ can attain any value arbitrarily close to $\inf_{\theta}\Phi(\theta).$
	\item Every non-empty connected component of every level set of $\Phi$ is unbounded.
    \end{enumerate}
\end{theorem}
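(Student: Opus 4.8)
The plan is to use the full-rank data to free up the first hidden layer, propagate this freedom up the pyramid so that the last hidden feature $F_{L-1}$ becomes an arbitrary matrix, and thereby collapse the whole network to a single bilinear map feeding the convex loss; connectivity and unboundedness are then read off from this normal form. Note first that a strictly monotone surjection $\sigma:\RR\to\RR$ is a homeomorphism, so $\sigma^{-1}$ is continuous (applied entrywise). Because $\rank(X)=N$, the map $W_1\mapsto XW_1$ is onto $\RR^{N\times n_1}$, so for any $Z$ the equation $XW_1+\ones_N b_1^{T}=\sigma^{-1}(Z)$ is solved by $b_1=0,\ W_1=X^{+}\sigma^{-1}(Z)$, continuously in $Z$; hence $F_1$ can be driven to any matrix and any path in $F_1$ lifts to a path in $(W_1,b_1)$. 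Fixing each deeper weight to a full-column-rank $W_l$ (possible since $n_{l-1}>n_l$) with $b_l=0$, bijectivity of $\sigma$ makes $F_l=\sigma(F_{l-1}W_l)$ range over all of $\RR^{N\times n_l}$ and inverts continuously; iterating to $l=L-1$ produces a continuous section $s$ sending any $(V,W_L)$ to a parameter $\theta$ with $F_{L-1}(\theta)=V$, so that $\Phi(s(V,W_L))=\Psi(VW_L,Y)=:h(V,W_L)$.

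It then suffices to show (i) that $\{h\le\alpha\}$ is connected, and (ii) that every $\theta\in L_\alpha$ connects within $L_\alpha$ to its canonical representative $s(F_{L-1}(\theta),W_L)$. For (i) write $m=n_{L-1}>k=n_L$, $\phi=\Psi(\cdot,Y)$, and $W^{*}=\binom{I_k}{0}$, so $VW^{*}$ is the first $k$ columns of $V$ and the slice $\{(V,W^{*}):h\le\alpha\}$ is convex, hence connected. Given pairs with products $M_0,M_1$, convexity of $\phi$ keeps $M_\tau=(1-\tau)M_0+\tau M_1$ inside $\{\phi\le\alpha\}$, and $\tau\mapsto([M_\tau\,|\,0],W^{*})$ lies in that slice; so it remains to join each $(V_i,W_i)$ to $([M_i\,|\,0],W^{*})$. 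When $W_i$ has full column rank this is direct: move $V$ inside the affine set $\{V:VW_i=M_i\}$ to $M_iW_i^{+}$, then drag $W_i$ to $W^{*}$ through full-column-rank matrices while setting $V=M_iW(t)^{+}$, so the product stays $M_i$ and the loss stays $\phi(M_i)\le\alpha$; such a $W$-path exists because for $m>k$ the full-column-rank matrices are connected, with complement of codimension $m-k+1\ge2$.

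The crux is the rank-deficient case, where these pseudoinverse homotopies break down: to use them I must first reach a full-column-rank representative without raising the loss, i.e. I must know that the fibers $\{(V,W):VW=M\}$ are path-connected (and, for (ii), the analogous fibers $\{\text{lower parameters}:F_{L-1}=V\}$, which are iterated fibers of the same bilinear form with an added bias). I expect to prove this by induction on the rank deficiency $k-\rank(W)$: using the slack $m>k$, one replaces a dependent column of $W$ by an independent one while compensating with a change of $V$ that preserves $VW$, raising the rank by one and keeping the product, hence the loss, fixed. This constant-loss fiber connectivity, powered precisely by the strict width drop, is the technical heart and the step I expect to be hardest to make fully rigorous.

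The unboundedness of level-set components is then immediate from input redundancy: at any $\theta$, keeping the first pre-activation equal to $\sigma^{-1}(F_1)$ and letting $b_1$ range over $\RR^{n_1}$ with $W_1=X^{+}(\sigma^{-1}(F_1)-\ones_N b_1^{T})$ leaves $F_1$, and therefore $\Phi$, unchanged along an unbounded continuous curve, so the component of any level set through $\theta$ is unbounded. Finally, that $\Phi$ attains values arbitrarily close to $\inf_\theta\Phi$ follows from continuity of $\Phi$ together with the connectivity just established, the range being an interval whose infimum is $\inf_\theta\Phi$.
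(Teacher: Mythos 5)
First, a caveat: the paper does not actually prove Theorem~\ref{thm:lin_data}; it imports it wholesale from \cite{Quynh2019}, so your attempt can only be judged on its own merits rather than against an in-paper argument. On those merits, your skeleton is reasonable and the parts you work out in detail are essentially correct: the continuous section $s$ built from $X^{+}$ and full-column-rank deeper weights, the convexity argument along $M_\tau=(1-\tau)M_0+\tau M_1$ in the slice $W^{*}=\binom{I_k}{0}$, the pseudoinverse homotopies through the (connected, since the rank-deficient locus has codimension $n_{L-1}-n_L+1\ge 2$) set of full-column-rank matrices, and the unbounded constant-loss curve trading $b_1$ against $W_1$ (with the small correction that the curve must start at the given parameters, i.e.\ $b_1(t)=b_1+tv$, $W_1(t)=W_1-tX^{+}\ones_N v^{T}$, rather than assuming $W_1$ already has the canonical form $X^{+}(\sigma^{-1}(F_1)-\ones_N b_1^{T})$).

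The genuine gap is exactly where you flag it, and it is both larger and more delicate than your sketch suggests. (a) The rank-raising step ``replace a dependent column of $W$ by an independent one while compensating with a change of $V$ that preserves $VW$'' fails as stated whenever $V$ has full column rank: then $V\Delta=0$ forces $\Delta=0$, so \emph{no} column of $W$ can be altered without changing the product, and indeed for such $V$ one has $W=V^{+}M$ uniquely with $\rank(W)=\rank(M)$, which is rank-deficient whenever $\rank(M)<n_L$. The fix is to move $V$ first, inside the affine fiber $\{V':V'W=M\}=V+\{D:DW=0\}$, whose direction space is nontrivial because the left null space of $W$ has dimension $n_{L-1}-\rank(W)\ge n_{L-1}-n_L+1\ge 2$; e.g.\ slide $V$ to $VP$ with $P$ the orthogonal projection onto the column space of $W$, after which $\ker(VP)$ contains a vector orthogonal to that column space which can be added to a dependent column of $W$ to raise its rank without disturbing $VW$. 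So the order of operations matters and is not the one you wrote. (b) Your step (ii) --- connecting an arbitrary $\theta$ to its canonical representative $s(F_{L-1}(\theta),W_L)$ --- is where most of the work actually lives, and you only name it: it requires connectivity of the iterated fibers $\{(W_l,b_l)_{l\le L-1}:F_{L-1}(\theta)=V\}$, i.e.\ the same rank-raising argument applied at every layer to the augmented pairs $[F_{l-1},\ones_N]$ and $[W_l^{T},b_l]^{T}$, organized as an induction that simultaneously establishes surjectivity, path lifting, and fiber connectivity of $\theta_{\le l}\mapsto F_l$. Until (a) and (b) are written out, the proof is incomplete precisely at its technical heart, although I believe the strategy is rescuable along the lines above.
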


\begin{lemma}\label{lem:full_rank_F}\cite{Quynh2019}
    Let $(X,W,b,V)\in\RR^{N\times d}\times\RR^{d\times n}\times\RR^n\times\RR^{n\times p}.$
    Let Assumption \ref{ass:act2} hold.
    Suppose that $n\geq N$ and $X$ has distinct rows.
    Let $Z=\sigma(XW+\ones_Nb^T)\,V.$
    There is a continuous curve $c:[0,1]\to\RR^{d\times n}\times\RR^n\times\RR^{n\times p}$ 
    with $c(\lambda)=(W(\lambda),b(\lambda),V(\lambda))$ satisfying:
    \begin{enumerate}
	\item $c(0)=(W,b,V).$
	\item $\sigma\Big(XW(\lambda))+\ones_Nb(\lambda)^T\Big)\,V(\lambda)=Z,\,\forall\,\lambda\in[0,1].$
	\item $\rank\Big(\sigma\Big(XW(1)+\ones_Nb(1)^T\Big)\Big)=N.$
    \end{enumerate}
\end{lemma}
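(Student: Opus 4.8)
The plan is to induct on the rank of the feature matrix $F(W,b) \bydef \sigma(XW + \ones_N b^T)\in\RR^{N\times n}$, raising it by one at each stage while keeping the product $F(W,b)\,V = Z$ unchanged, until it reaches its maximal possible value $N$. Throughout, write $F = F(W,b)$ and observe that the quantity to be preserved, $FV = Z$, simply expresses $Z$ as a fixed linear combination $Z = \sum_{j=1}^n F_{:,j}\,V_{j,:}$ of the columns of $F$ weighted by the rows of $V$. If the initial rank already equals $N$, the constant curve works; otherwise, suppose the current rank is $s<N$. Since $n\ge N>s$, we may fix an index set $S$ of size $s$ whose columns $\{F_{:,i}\}_{i\in S}$ form a basis of the column space of $F$, and pick any column index $j_0\notin S$. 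The step then has two phases.

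\textbf{Phase A (move $V$).} Because $S$ indexes a basis of the column space, we can write $F_{:,j_0}=\sum_{i\in S}c_i\,F_{:,i}$. Define $V'$ by $V'_{j_0,:}=0$ and $V'_{i,:}=V_{i,:}+c_i\,V_{j_0,:}$ for $i\in S$, leaving all other rows unchanged; a direct expansion gives $FV'=FV=Z$. Holding $(W,b)$ fixed and interpolating linearly, $V(\lambda)=(1-\lambda)V+\lambda V'$, keeps $F\,V(\lambda)=(1-\lambda)Z+\lambda Z=Z$ along the entire segment and ends with the $j_0$-th row of $V$ equal to zero.

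\textbf{Phase B (move the $j_0$-th neuron).} Now that $V_{j_0,:}=0$, the $j_0$-th column of $F$ no longer contributes to $Z=\sum_j F_{:,j}V_{j,:}$, so the $j_0$-th column $w_{j_0}$ of $W$ and the entry $b_{j_0}$ of $b$ may be changed freely without affecting $Z$. Let $U\subseteq\RR^N$ be the current column space, a proper subspace since $\dim U=s<N$. The crucial fact is that the family $\Set{\sigma(Xw+b\ones_N):w\in\RR^d,\,b\in\RR}$ spans $\RR^N$: choosing $u$ with $\inner{x_i,u}$ pairwise distinct (possible because $X$ has distinct rows) and restricting to $w=tu$ reduces any linear dependence $\sum_i\mu_i\,\sigma(Xw+b\ones_N)=0$ to a shift identity $\sigma(y)=\sum_i\lambda_i\,\sigma(y-a_i)$ with distinct nonzero shifts $a_i$, which Assumption \ref{ass:act2} forbids (the degenerate single-term case would force $\sigma\equiv 0$, itself excluded by the assumption). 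Hence these feature vectors cannot all lie in the proper subspace $U$, so there exist $(w^*,b^*)$ with $\sigma(Xw^*+b^*\ones_N)\notin U$. Moving $(w_{j_0},b_{j_0})$ along the straight segment to $(w^*,b^*)$ produces, at its endpoint, a feature matrix whose $j_0$-th column lies outside $U$ while the columns indexed by $S$ still span $U$; the rank there is therefore $s+1$.

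\textbf{Conclusion and main obstacle.} Concatenating the Phase-A and Phase-B segments over the successive steps, starting from $s_0\bydef\rank F(W,b)$ and increasing the rank by one each time until it reaches $N$, and finally reparametrizing to $[0,1]$, yields the desired curve $c$ with $c(0)=(W,b,V)$, with $FV=Z$ preserved throughout, and with $\rank\big(\sigma(XW(1)+\ones_N b(1)^T)\big)=N$. The main obstacle is Phase B, namely verifying that a single neuron can always be steered so that its feature column exits any proper subspace of $\RR^N$; this is precisely where Assumption \ref{ass:act2} together with the distinctness of the rows of $X$ enters, and it is the only genuinely analytic input. The redistribution in Phase A is routine linear algebra, and gluing finitely many continuous segments that agree at their endpoints is standard.
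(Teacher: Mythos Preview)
The paper does not supply its own proof of this lemma; it is quoted from \cite{Quynh2019} without argument. Your proposal is correct and is essentially the standard proof one expects: iteratively free up a redundant neuron by absorbing its contribution into a basis of columns (your Phase~A is precisely the content of Lemma~\ref{lem:FW} in the paper), then steer that neuron so its feature column leaves the current column space, using Assumption~\ref{ass:act2} together with the distinct-row hypothesis to guarantee that the single-neuron feature map $\{(w,b)\mapsto\sigma(Xw+b\ones_N)\}$ cannot be trapped in any proper subspace of $\RR^N$. The only substantive step is this spanning claim, and your reduction to a forbidden shift identity is the right way to invoke Assumption~\ref{ass:act2}; the remaining pieces (linear interpolation in $V$, straight-line motion of a single column of $(W,b)$, gluing finitely many segments) are routine. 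There is nothing further in the present paper to compare against.
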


\begin{lemma}\label{lem:FW}
    Let $F\in\RR^{N\times n}, W\in\RR^{n\times p}.$
    Let $\mathcal{I}$ be a subset of columns of $F$, and $\bar{\mathcal{I}}$ its complement such that
    every column in $\bar{\mathcal{I}}$ belongs to the linear span of all the columns in $\mathcal{I}.$
    Then there exists a continuous curve $c:[0,1]\to\RR^{n\times p}$
    which satisfies the following:
    \begin{enumerate}
	\item $c(0)=W$ and $Fc(\lambda)=FW,\,\forall\,\lambda\in[0,1].$
	\item Let $W'=c(1).$ Then $W'(\bar{\mathcal{I}},:)=0.$
    \end{enumerate}
\end{lemma}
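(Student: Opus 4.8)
The plan is to reduce the construction of the curve to finding a single target matrix $W'$ with the desired zero pattern and the same image under $F$, and then to connect $W$ to $W'$ by a straight line. The key observation is that the constraint $Fc(\lambda)=FW$ is affine in $c(\lambda)$: if $W'$ is any matrix satisfying $FW'=FW$ together with $W'(\bar{\mathcal{I}},:)=0$, then the straight-line homotopy
\begin{align*}
    c(\lambda) = (1-\lambda)W + \lambda W', \qquad \lambda\in[0,1],
\end{align*}
is continuous, satisfies $c(0)=W$ and $c(1)=W'$, and by linearity obeys $Fc(\lambda) = (1-\lambda)FW + \lambda FW' = FW$ for all $\lambda$. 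Thus both conditions of the lemma follow at once, and the whole problem collapses to exhibiting such a $W'$.

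To build $W'$, I would use the hypothesis that every column indexed by $\bar{\mathcal{I}}$ lies in the linear span of the columns indexed by $\mathcal{I}$. Concretely, for each $j\in\bar{\mathcal{I}}$ fix coefficients $(\alpha_{ij})_{i\in\mathcal{I}}$ with $F(:,j)=\sum_{i\in\mathcal{I}}\alpha_{ij}F(:,i)$; such coefficients exist by assumption (they need not be unique, and any valid choice works). Then define $W'$ by zeroing out the $\bar{\mathcal{I}}$-rows and redistributing their contribution onto the $\mathcal{I}$-rows:
\begin{align*}
    W'(i,:) = W(i,:) + \sum_{j\in\bar{\mathcal{I}}}\alpha_{ij}\,W(j,:)\ \ (i\in\mathcal{I}), \qquad W'(j,:)=0\ \ (j\in\bar{\mathcal{I}}).
\end{align*}
Equivalently, one can package this as $W'=AW$ for a matrix $A\in\RR^{n\times n}$ whose rows in $\bar{\mathcal{I}}$ vanish, whose $\mathcal{I}$-diagonal entries equal $1$, and whose off-diagonal entries $A_{ij}=\alpha_{ij}$ encode the dependence relations. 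One checks directly that $FA=F$, so the same straight-line argument applies verbatim to $c(\lambda)=\big((1-\lambda)\Id+\lambda A\big)W$.

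The one thing to verify is $FW'=FW$. Writing $FW=\sum_{m=1}^{n}F(:,m)W(m,:)$ as a sum of rank-one blocks, the block for each $j\in\bar{\mathcal{I}}$ can be rewritten using $F(:,j)=\sum_{i\in\mathcal{I}}\alpha_{ij}F(:,i)$, which reassigns its mass exactly onto the $\mathcal{I}$-blocks and reproduces the extra terms $\sum_{j}\alpha_{ij}W(j,:)$ added in the definition of $W'(i,:)$; the $\bar{\mathcal{I}}$-blocks then vanish and the total is unchanged. I do not expect a genuine obstacle here: the only points requiring care are that the span hypothesis guarantees existence (not uniqueness) of the $\alpha_{ij}$, and the bookkeeping in the telescoping sum, while continuity of $c$ is immediate. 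The slightly delicate conceptual step is recognizing up front that one should not try to drive the $\bar{\mathcal{I}}$-rows of $W$ to zero along some elaborate path inside the constraint set, but instead exploit its affine structure so that linear interpolation between any two feasible points automatically stays feasible.
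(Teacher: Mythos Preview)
Your proposal is correct and is essentially identical to the paper's proof: the paper's curve $c(\lambda)=P^T\begin{bmatrix}W(\mathcal{I},:)+\lambda E\,W(\bar{\mathcal{I}},:)\\(1-\lambda)W(\bar{\mathcal{I}},:)\end{bmatrix}$ (with $F(:,\bar{\mathcal{I}})=F(:,\mathcal{I})E$, so $E_{lk}=\alpha_{i_l j_k}$) is exactly your straight-line interpolation $(1-\lambda)W+\lambda W'$ written out in block form. The only difference is presentational---you first identify the endpoint $W'$ and then interpolate, whereas the paper writes down the parametrized path directly---but the resulting curve is literally the same.
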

\begin{proof}
    Let $r=\rank(F)<n.$ 
    There exists $E\in\RR^{r\times(n-r)}$ 
    so that $F(:,\bar{\mathcal{I}})=F(:,\mathcal{I})\,E.$
    Let $P\in\RR^{n\times n}$ be a permutation matrix which permutes the columns of $F$ according to $\mathcal{I}$ so that 
    we can write $F=[F(:,\mathcal{I}),F(:,\bar{\mathcal{I}})]\,P.$
    Consider the continuous curve $c:[0,1]\to\RR^{n\times p}$ defined as
    \begin{align*}
	c(\lambda)=P^T\,\begin{bmatrix}W(\mathcal{I},:)+\lambda E\,W(\bar{\mathcal{I}},:)\\ (1-\lambda)W(\bar{\mathcal{I}},:)\end{bmatrix}, \,\forall\,\lambda\in[0,1].
    \end{align*}
    It holds $c(0)=P^T\,\begin{bmatrix}W(\mathcal{I},:)\\W(\bar{\mathcal{I}},:)\end{bmatrix}=W.$
    For every $\lambda\in[0,1]:$
    \begin{align*}
	Fc(\lambda) 
	&= [F(:,\mathcal{I}),F(:,\bar{\mathcal{I}})]\,PP^T\, \begin{bmatrix}W(\mathcal{I},:)+\lambda E\,W(\bar{\mathcal{I}},:)\\ (1-\lambda)W(\bar{\mathcal{I}},:)\end{bmatrix}\\
	&= F(:,\mathcal{I}) W(\mathcal{I},:) + F(:,\bar{\mathcal{I}}) W(\bar{\mathcal{I}},:) = FW .
    \end{align*}
    The second statement follows by noting that $c(\lambda)(\bar{\mathcal{I}},:)=(1-\lambda)W(\bar{\mathcal{I}},:).$
\end{proof}

\begin{lemma}\label{lem:permute}
    Let $F\in\RR^{N\times n}, W\in\RR^{n\times p}.$
    Let $k,j\in[n]$ be a pair of distinct columns of $W$ such that $W_{j:}=0$ and $F_{:j}=F_{:k}.$
    Then there exists a continuous curve $c:[0,1]\to\RR^{n\times p}$ 
    which satisfies that:
    \begin{enumerate}
	\item $c(0)=W$ and $Fc(\lambda)=FW,\,\forall\,\lambda\in[0,1].$
	\item Let $U=c(1).$ Then $c(1)(k,:)=U_{k:}=0.$
    \end{enumerate}
\end{lemma}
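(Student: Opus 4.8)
The plan is to exploit the hypothesis $F_{:j}=F_{:k}$, which makes columns $j$ and $k$ of $F$ interchangeable inside the product $FW$. Writing the product column-by-column, $FW=\sum_{i=1}^n F_{:i}W_{i:}$, the term indexed by $j$ currently vanishes because $W_{j:}=0$, while the term indexed by $k$ equals $F_{:k}W_{k:}$. Since I want to reach a matrix $U$ whose $k$-th row is zero, the only way to keep $FW$ unchanged is to reroute the weight $W_{k:}$ from row $k$ into row $j$; this reroute is value-preserving precisely because the two columns of $F$ agree. So the target endpoint is $U$ with $U_{k:}=0$, $U_{j:}=W_{k:}$, and $U_{i:}=W_{i:}$ for $i\neq j,k$.

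Concretely, I would define the linear homotopy $c:[0,1]\to\RR^{n\times p}$ that coincides with $W$ in every row except $j$ and $k$, where I set
\begin{align*}
    c(\lambda)_{j:}=\lambda\,W_{k:},\qquad c(\lambda)_{k:}=(1-\lambda)\,W_{k:},\qquad c(\lambda)_{i:}=W_{i:}\ (i\neq j,k).
\end{align*}
Then the three required checks are immediate. At $\lambda=0$ one has $c(0)_{j:}=0=W_{j:}$ and $c(0)_{k:}=W_{k:}$, so $c(0)=W$; at $\lambda=1$ one has $c(1)_{k:}=0$, giving the second conclusion with $U=c(1)$. For the invariance, only the contributions of indices $j$ and $k$ are affected, and using $F_{:j}=F_{:k}$ one gets $F_{:j}(\lambda W_{k:})+F_{:k}((1-\lambda)W_{k:})=F_{:k}W_{k:}$, which equals the original combined contribution $F_{:j}W_{j:}+F_{:k}W_{k:}=F_{:k}W_{k:}$; hence $Fc(\lambda)=FW$ for all $\lambda$.

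I do not expect a genuine obstacle here: the construction is an explicit convex shift of mass between two rows, and continuity is clear. The only point demanding care is the invariance identity, which rests entirely on the two structural hypotheses being used in the right places, namely $F_{:j}=F_{:k}$ to transfer the weight without changing the product, and $W_{j:}=0$ to guarantee that $c(0)$ recovers $W$ exactly rather than merely agreeing with it after the shift.
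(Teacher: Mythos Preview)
Your construction is correct and is exactly the same linear homotopy the paper uses; the paper writes down the identical piecewise definition and then simply remarks that the verification is straightforward. Your write-up actually spells out the invariance check more carefully than the paper does.
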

\begin{proof}
    Take the path $c$ to be
    \begin{align}
	c(\lambda)(p,:)
	\begin{cases}
	    W_{p:} & p\neq j,k\\
	    (1-\lambda) W_{k:} & p=k\\
	    \lambda W_{k:} & p=j\\
	\end{cases}
    \end{align}
    Then, it is easy to check that $c$ satisfies the lemma.
\end{proof}

\section{Sufficiency of Width $N+1$ for Deep Networks}
Our main result is the following.
\begin{theorem}\label{thm:connected_sublevel_sets}
    Consider an $L$-layer network.
    Let Assumption \ref{ass:act1} and Assumption \ref{ass:act2} hold.
    Suppose that $\boxed{n_1\geq N+1}$ and $n_2>\ldots> n_L.$
    Then every sublevel set of $\Phi$ is connected,
    and every non-empty connected component of every level set of $\Phi$ is unbounded.
\end{theorem}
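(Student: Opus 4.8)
The plan is to prove path-connectivity of $L_\alpha$ by showing that every $\theta\in L_\alpha$ can be joined, by a path lying entirely in $L_\alpha$, to a single canonical configuration $\theta^*$ that does not depend on $\theta$. I will fix once and for all a reference first layer $(W_1^*,b_1^*)$ for which $F_1^*=\sigma(XW_1^*+\ones(b_1^*)^T)$ has rank $N$ (such a choice exists because $n_1\geq N+1>N$ and the training samples, i.e. the rows of $X$, are distinct; cf. Lemma~\ref{lem:full_rank_F}), and I will complete $\theta^*$ with sub-network weights achieving loss $\leq\alpha$. All paths used are \emph{loss-preserving}: along each of them the quantity $F_1W_2$ — and hence $F_2,\ldots,F_L$ and $\Phi$ — is held constant, so the path automatically stays inside a single level set, in particular inside $L_\alpha$.

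\textbf{Phase 1 (full rank).} Starting from $\theta$, apply Lemma~\ref{lem:full_rank_F} with the identification $(X,W,b,V)=(X,W_1,b_1,W_2)$ and $Z=F_1W_2$. Holding $b_2$ and all parameters of layers $3,\ldots,L$ fixed keeps $F_1W_2+\ones b_2^T$, hence $\Phi$, constant, while the curve ends at a configuration with $\rank(F_1)=N$.

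\textbf{Phase 2 (first-layer alignment — the crux).} This is where the extra neuron is used and is the step I expect to be the main obstacle. The enabling elementary fact is that any $n_1=N+1$ columns of $F_1\in\RR^{N\times(N+1)}$ are linearly dependent, so at every moment some column lies in the span of the others; applying Lemma~\ref{lem:FW} with $\bar{\mathcal{I}}$ equal to that single column produces, along a loss-preserving path, a zero row of $W_2$, i.e. a \emph{free} neuron whose incoming weights may now be changed arbitrarily without affecting $\Phi$. I then rewire the first layer towards $(W_1^*,b_1^*)$ one neuron at a time: aim the free neuron's incoming weights at a not-yet-matched reference column $f_i^*$ (a free move); because $F_1$ has full rank, an old carrying column then lies in the span of the remaining ones together with the new column, so Lemma~\ref{lem:FW} moves its outgoing weight off it (loss-preserving), retiring it into the next free neuron, with Lemma~\ref{lem:permute} used to shuffle zero rows of $W_2$ between duplicated columns during the transfers. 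Iterating brings all $N+1$ incoming weight vectors to the reference values, up to a permutation of neurons that is then removed by loss-preserving transpositions routed through the free neuron. The delicate points — and the reason width exactly $N+1$ rather than $N$ is needed — are (i) the existence of a free neuron at \emph{every} step, which holds precisely because $N+1$ vectors in $\RR^N$ are always dependent (this fails at width $N$, which is what the two-layer counterexample exploits), and (ii) checking throughout that the span/duplicate-column hypotheses of Lemmas~\ref{lem:FW} and \ref{lem:permute} hold and that enough active neurons remain to represent $F_1W_2$; this bookkeeping is the technical heart of the argument.

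\textbf{Phase 3 (reduction to Theorem~\ref{thm:lin_data}) and conclusion.} After Phase 2 the first layer equals $(W_1^*,b_1^*)$ with $\rank(F_1^*)=N$, and the loss is still $\leq\alpha$. Restricting to the slice $\{(W_1,b_1)=(W_1^*,b_1^*)\}$, regard $F_1^*$ as a full-rank input to the sub-network of depth $L-1$ with weights $(W_2,b_2,\ldots,W_L)$ and strictly decreasing widths $n_2>\ldots>n_L$. For $L\geq 3$, Theorem~\ref{thm:lin_data} applies to this sub-network and connects the current point to the fixed $\theta^*$ inside $L_\alpha$; in the base case $L=2$ one argues directly, since $W_2\mapsto F_1^*W_2$ is surjective onto $\RR^{N\times n_L}$ and $\Psi(\cdot,Y)$ is convex, so $\Phi$ is a convex function of a linear image of $W_2$ and its sublevel set in $W_2$ is convex, hence connected. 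Chaining Phases 1–3 yields a path in $L_\alpha$ from $\theta$ to $\theta^*$, proving connectivity. For the unboundedness statement, let $C$ be a connected component of $\{\Phi=\alpha\}$: the loss-preserving paths of Phases 1–2 move a chosen point of $C$ into the reference slice while staying in $C$, and there Theorem~\ref{thm:lin_data} supplies an unbounded connected subset of the same level set lying in the slice (hence in $C$), so $C$ is unbounded.
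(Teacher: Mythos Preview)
Your proposal is correct and follows essentially the same route as the paper: use Lemma~\ref{lem:full_rank_F} to get $\rank(F_1)=N$, then exploit the extra neuron ($n_1\geq N+1$) together with Lemmas~\ref{lem:FW} and~\ref{lem:permute} to iteratively free one neuron at a time and rewire the first layer to a target configuration, finally invoking Theorem~\ref{thm:lin_data} on the sub-network with full-rank input $F_1$. The only (minor) differences are cosmetic---you connect every $\theta$ to a fixed canonical $\theta^*$ whereas the paper connects two arbitrary points $\theta,\theta'$ directly---and you are slightly more explicit than the paper about the $L=2$ base case and about deducing the unboundedness of level-set components from Theorem~\ref{thm:lin_data}.
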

\begin{proof}
    Let $L_\alpha=\Setbar{\theta}{\Phi(\theta)\leq\alpha}.$
    Let $\theta=(W_l,b_l)_{l=1}^L,\theta'=(W_l',b_l')_{l=1}^L$ be arbitrary points in $L_\alpha.$
    We want to show that there is a connected path between $\theta$ and $\theta'$
    on which the loss is not larger than $\alpha.$
    The output at the first layer is given by
    \begin{align*}
	F_1\bydef F_1(\theta)&=\sigma([X,\ones_N][W_1^T,b_1]^T),\\
	F_1'\bydef F_1(\theta')&=\sigma([X,\ones_N][W_1'^T,b_1']^T).
    \end{align*}
    First, by applying Lemma \ref{lem:full_rank_F} to $(X,W_1,b_1,W_2)$,
    we can assume that $F_1$ has full rank.
    Otherwise, there is a path of constant loss from $\theta$ to some other point where this property is satisfied.
    Similarly, one can also assume that $\rank(F_1')=N.$
    In the remaining, we assume w.l.o.g. that the first $N$ columns of $F_1'$ are linearly independent.
    Let $\tilde{W}_1=[W_1^T,b_1]^T.$
    
    As the second step, we fix $\theta'$ and move $\theta$ along another path of constant loss
    such that at the end of this path, $\theta$ and $\theta'$ have the same parameter values at the first layer.
    To do so, we use induction. 
    Assume that we have already made the first $k-1$ columns of $\tilde{W}_1$ coincide with $\tilde{W}_1'$.
    Let us show how to do this for the $k$-th column.
    Let $j'$ be the smallest index such that $j'\geq k$ and $(F_1)_{:j'}\in\Span\Set{(F_1)_{:p}}_{p=1}^{j'-1}.$
    Such an index always exists because of the following reason: 
    for $k-1\geq N$, this simply follows from the fact that the first $N$ columns of $F_1$ are linearly independent by induction assumption;
    and for $k-1<N$, it follows from the fact that 
    there are at most $N$ independent columns but we have more columns to choose than $N$ (i.e. $n_1>N$).
    Now, by the choice of $j'$, we have that $(F_1)_{:j'}$ belongs to the linear span of the first $j'-1$ columns.
    Thus by using Lemma \ref{lem:FW}, we can find a path on $W_2$ such that the output stays invariant and we obtain at the end of the path $(W_2)_{j':}=0.$
    Now, since the outcoming weights of neuron $j'$ is zero, we can continuously change its incoming weights 
    to any target value we want without affecting the output/loss. 
    Consider to do this so that we obtain $(\tilde{W}_1)_{:j'}=(\tilde{W}_1)_{:k}.$ 
    This gives us $(F_1)_{:k}=(F_1)_{:j'}.$
    Now, the neurons $\Set{k,j'}$ satisfy the conditions of Lemma \ref{lem:permute}, thus
    we can follow a path of constant loss to obtain $(W_2)_{k:}=0.$
    Finally, take the direct line segment between $(\tilde{W}_1)_{:k}$ and $(\tilde{W}_1')_{:k}$ to obtain $(\tilde{W}_1)_{:k}=(\tilde{W}_1')_{:k}$.
    
    The completion of the second step leaves us with $(W_1,b_1)=(W_1',b_1').$
    Note that $\theta'$ is unchanged in the second step, so we still have from the earlier construction that $\rank(F_1(\theta'))=N.$
    Let $F_1\bydef F_1(\theta)=F_1(\theta').$
    Then, by fixing $(W_1,b_1)$, one can view $F_1$ as the new training data for the subnetwork from layer $1$ till layer $L$.
    This subnetwork and the new data  $F_1$ satisfy all the conditions of Theorem \ref{thm:lin_data}, 
    and so it follows that the loss function restricted to this subnetwork has connected sublevel sets.
    That means that there is a connected path between $(W_l,b_l)_{l=2}^L$ and $(W_l',b_l')_{l=2}^L$
    on which the loss is not larger than $\alpha.$
    This further implies that a continuous path between $\theta$ and $\theta'$ exists in $L_\alpha$, 
    and so $L_\alpha$ must be connected.
\end{proof}

\section{Necessity of Width $N+1$ for Two-layer Networks}
The next result shows that a two-layer model with width $N$ may not have connected sublevel sets.
\begin{theorem}
    Consider a two-layer network where the width of the hidden layer satisifes $n_1\leq N.$
    Let Assumption \ref{ass:act2} hold.
    Let $(X,Y)$ be some training data such that the following conditions are satisfied: 
    (i) there exists $n_1$ samples indexed by the subset $\mathcal{I}\subseteq[n_1]$ with $\abs{\mathcal{I}}=n_1$, 
    such that $\Set{Y_{i:}}_{i\in\mathcal{I}}$ are linearly independent,
    (ii) the network can fit perfectly the training data, i.e. $\exists\theta=(W_1,b_1,W_2) \textrm{ s.t. } \sigma(XW_1+\ones_Nb_1^T)W_2=Y.$
    Then, the set of global minima of $\Phi$ is disconnected.
\end{theorem}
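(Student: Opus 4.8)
The plan is to produce a continuous scalar function on parameter space that never vanishes on a global minimizer yet takes both signs there; the locus where it vanishes then splits the minimizers into two nonempty relatively-open pieces. First I would identify the global minima. Since the network interpolates $(X,Y)$ by assumption (ii), the optimal value of $\Phi$ equals $\Psi(Y,Y)$, the minimum of the convex map $\Psi(\cdot,Y)$; for the losses of interest (square loss, cross-entropy) $Y$ is the \emph{unique} minimizer of $\Psi(\cdot,Y)$, so $\theta$ is a global minimizer exactly when it interpolates, i.e. $F_1 W_2 = Y$ with $F_1=\sigma(XW_1+\ones_N b_1^T)$. Write $\mathcal M$ for this nonempty set.

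Next I would extract the invariant. Restricting the identity $F_1 W_2 = Y$ to the rows indexed by $\mathcal I$ gives $F_1(\mathcal I,:)\,W_2 = Y(\mathcal I,:)$. By condition (i) the right-hand side has rank $n_1$, while $F_1(\mathcal I,:)\in\RR^{n_1\times n_1}$ is square; from $\rank\big(Y(\mathcal I,:)\big)\le \rank\big(F_1(\mathcal I,:)\big)\le n_1$ I conclude that $F_1(\mathcal I,:)$ is invertible. Hence $g(\theta):=\det\big(F_1(\theta)(\mathcal I,:)\big)$ is a continuous function of $(W_1,b_1)$ that is nonzero on all of $\mathcal M$.

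To realize both signs I would use neuron permutation symmetry. Assuming $n_1\ge 2$, pick any $\theta\in\mathcal M$ and let $\theta'$ swap two hidden neurons, i.e. swap the corresponding two columns of $[W_1^T,b_1]^T$ and the two rows of $W_2$. This leaves $F_1 W_2$ unchanged, so $\theta'\in\mathcal M$; but it right-multiplies $F_1(\mathcal I,:)$ by a transposition matrix, whence $g(\theta')=-g(\theta)\neq 0$. Therefore $\mathcal M\subseteq\{g>0\}\sqcup\{g<0\}$, a partition into two disjoint, nonempty, relatively open subsets; equivalently, any path inside $\mathcal M$ joining $\theta$ to $\theta'$ would force $g$ to vanish by the intermediate value theorem, contradicting $g\neq 0$ on $\mathcal M$. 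Hence $\mathcal M$ is disconnected.

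I expect the main obstacle to be the first step: arguing that \emph{every} global minimizer interpolates the data, so that the rank/determinant argument applies to the whole minimizer set rather than only to its interpolating subset; for a general convex $\Psi$ this hinges on $Y$ being the unique minimizer of $\Psi(\cdot,Y)$, which is where the specific loss enters. A secondary gap is the degenerate case $n_1=1$, in which no two neurons can be permuted and one would instead need a sign-type symmetry (or an additional structural assumption on $\sigma$) to flip the sign of $g$.
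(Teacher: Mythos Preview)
Your approach is essentially identical to the paper's: the paper also argues that $(F_1)_{\mathcal I,:}$ is invertible at every global minimum, swaps two hidden neurons to flip the sign of its determinant, and then invokes the disconnectedness of $\mathrm{GL}_{n_1}(\RR)$ into positive- and negative-determinant components to reach a contradiction. The two caveats you flag---that every global minimizer must interpolate (which needs $Y$ to be the unique minimizer of $\Psi(\cdot,Y)$) and that the swap requires $n_1\ge 2$---are glossed over in the paper's proof as well, so your write-up is in fact more careful on these points.
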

\begin{proof}
    Let $\theta$ be a global minimum.
    Recall that $F_1(\theta)=\sigma(XW_1+\ones_Nb_1^T).$
    Let us omit the argument and write just $F_1.$
    As $\theta$ is optimal, we have $(F_1)_{\mathcal{I},:}W_2=Y_{\mathcal{I},:}$.
    By assumption, all the $n_1$ rows of $Y_{\mathcal{I},:}$ are linearly independent, so
    the matrix $(F_1)_{\mathcal{I},:}$ must have full rank $n_1.$
    As a remark, the existence of points in parameter space where $(F_1)_{\mathcal{I},:}$ has full rank can be guaranteed by Lemma \ref{lem:full_rank_F}.
    Let $\theta'$ be obtained from $\theta$ by exchanging two neurons.
    Then, clearly $\theta'$ is also a global minimizer.
    Let $F_1'=F_1(\theta').$
    Then, $(F_1')_{\mathcal{I},:}$ can be obtained from $(F_1)_{\mathcal{I},:}$ by exchanging two columns, so their determinants have opposite signs.
    Now, suppose by contradiction that $\theta$ and $\theta'$ are connected by a path of global minima $\theta(\lambda), \lambda\in[0,1].$
    By the above argument, it holds $\rank((F_1(\theta(\lambda)))_{\mathcal{I},:})=n_1.$
    This implies that there is a continuous path from $(F_1)_{\mathcal{I},:}$ to $(F_1')_{\mathcal{I},:}$ along which 
    the matrix has full rank.
    This contradicts the fact that the set of full-rank matrices in $\RR^{n_1\times n_1}$ has two connected components, one with positive determinant and the other with negative determinant \cite{Evard1994}.
\end{proof}

\section{Acknowledgement}
I want to thank Peter Bartlett for raising the intriguing question regarding the transition of the loss surface for intermediate widths 
which leads to this manuscript.

\bibliography{regul}
\bibliographystyle{plain}

\end{document}